\newtheorem{theorem}{Theorem}
\newtheorem{lemma}[theorem]{Lemma}
\newcommand{\NSGAtwo}{NSGA\nobreakdash-II\xspace}
\newcommand{\NSGA}{NSGA\nobreakdash-III\xspace}
\newcommand{\onemax}{\textsc{OneMax}\xspace}
\newcommand{\zeromax}{\textsc{ZeroMax}\xspace}
\newcommand{\oneminmax}{\textsc{OneMinMax}\xspace}
\newcommand{\threeOMM}{\textsc{3-OMM}\xspace}
\newcommand{\lotz}{\textsc{LOTZ}\xspace}
\newcommand{\mlotz}{$m$\textsc{LOTZ}\xspace}
\newcommand{\mcocz}{$m$\textsc{COCZ}\xspace}
\newcommand{\R}{\ensuremath{\mathbb{R}}}
\newcommand{\N}{\ensuremath{\mathbb{N}}} 
\newcommand{\zmin}{z^{\min}}
\newcommand{\zmax}{z^{\max}}
\let\originalleft\left
\let\originalright\right
\renewcommand{\left}{\mathopen{}\mathclose\bgroup\originalleft}
\renewcommand{\right}{\aftergroup\egroup\originalright}
\newcommand{\colvec}[2][.6]{%
  \scalebox{#1}{%
    \renewcommand{\arraystretch}{.6}%
    $\begin{pmatrix}#2\end{pmatrix}$%
  }
}
\let\oldsqrt\sqrt
\def\hksqrt{\mathpalette\DHLhksqrt}
\def\DHLhksqrt#1#2{\setbox0=\hbox{$#1\oldsqrt{#2\,}$}\dimen0=\ht0
   \advance\dimen0-0.2\ht0
   \setbox2=\hbox{\vrule height\ht0 depth -\dimen0}%
   {\box0\lower0.4pt\box2}}
\renewcommand\sqrt\hksqrt
\DeclareMathOperator{\refp}{rp}
\DeclareMathOperator{\nad}{nad}
\DeclareMathOperator{\mTrue}{True}
\DeclareMathOperator{\mFalse}{False}
\DeclareMathOperator{\valid}{valid}
\begin{document}

\title{A Mathematical Runtime Analysis of the Non-dominated Sorting Genetic Algorithm III (\NSGA)}

\author{Simon Wietheger\setcounter{footnote}{6}\thanks{Hasso Plattner Institute, Potsdam, Germany. Work done while visiting \'Ecole Poytechnique, France} \and Benjamin Doerr\thanks{Laboratoire d'Informatique (LIX), CNRS, \'Ecole Polytechnique, Institut Polytechnique de Paris, Palaiseau, France}}


%

\maketitle

\sloppy{

\begin{abstract}
The Non-dominated Sorting Genetic Algorithm II (NSGA-II) is the most prominent multi-objective evolutionary algorithm for real-world applications.
While it performs evidently well on bi-objective optimization problems, empirical studies suggest that it is less effective when applied to problems with more than two objectives. A recent mathematical runtime analysis confirmed this observation by proving the NGSA-II for an exponential number of iterations misses a constant factor of the Pareto front of the simple 3-objective \oneminmax problem.

In this work, we provide the first mathematical runtime analysis of the NSGA-III, a refinement of the NSGA-II aimed at better handling more than two objectives. 
We prove that the NSGA-III with sufficiently many reference points -- a small constant factor more than the size of the Pareto front, as suggested for this algorithm --
computes the complete Pareto front of the 3-objective \textsc{OneMinMax} benchmark in an expected number of $O(n \log n)$ iterations.
This result holds for all population sizes (that are at least the size of the Pareto front). It shows a drastic advantage of the NSGA-III over the NSGA-II on this benchmark. The mathematical arguments used here and in previous work on the NSGA-II suggest that similar findings are likely for other benchmarks with three or more objectives.
\end{abstract}

\section{Introduction}

Many practical applications require to optimize for multiple, conflicting objectives.
Such tasks can be tackled by population-based algorithms, whose population eventually represents a set of Pareto solutions, solutions that cannot strictly be dominated by any other solution.  
Thereby, they represent multiple useful trade-offs between the objectives and allow the user to choose among these according to their personal preferences.
Indeed, evolutionary algorithms (EAs), or, more precisely, multi-objective evolutionary algorithms (MOEAs), have been successfully applied to many real-world problems \cite{ZhouQLZSZ11}.
Among these, Zhou et al.~\cite{ZhouQLZSZ11} identify the non-dominated sorting genetic algorithm (\NSGAtwo) \cite{DebPAM02} as the most prominent one.

Both empirical evaluations \cite{KhareYD03,PurshouseF07} and recent mathematical runtime analyses \cite{ZhengLD22,DoerrQ22ppsn,BianQ22,DangOSS23,DoerrQ23LB,DoerrQ23crossover} confirm the strong results of the \NSGAtwo on bi-objective benchmarks.
The performance on problems with 3 or more objectives, however, is not as well understood.
Empirical studies, for example \cite{KhareYD03}, suggest that the \NSGAtwo struggles with such problems. A recent mathematical runtime analysis~\cite{ZhengD22arxivmany} shows that the \NSGAtwo regularly loses desirable solutions when optimizing the $3$-objective \threeOMM problem, and consequently, cannot find its Pareto front (the set of Pareto optimal solution values) in sub-exponential time.
As a remedy, Deb and Jain~\cite{DebJ14} proposed a modified version of the \NSGAtwo, called \NSGA.
It replaces the crowding distance, a measure which the \NSGAtwo uses in addition to the dominance relation to determine which individuals are taken in the next generation, by a procedure involving reference points in the solution space.
Their evaluations on benchmarks with 3 to 15 objectives show that the \NSGA is suitable for more than 2 objectives.

These empirical insights are, however, not yet backed with a theoretical understanding.
In order to fill this gap, we mathematically analyze the runtime of the \NSGA on the  \threeOMM problem. 
We show that by employing sufficiently many reference points (a small constant factor more than the size of the Pareto front, as suggested for this algorithm) and a population at least of the size of the Pareto front, $N \ge (\frac{n}{2}+1)^2$, once a solution for a point on the Pareto front is found, the population will always contain such a solution. This is a notable difference to the \NSGAtwo~\cite{ZhengD22arxivmany} and enables us to prove that after an expected number of $O(n \log n)$ iterations the \NSGA (for all future iterations) has a population that covers the Pareto front. Overall, this result indicates, in a rigorous manner, that the selection mechanism of the \NSGA has significant advantages over the one of the \NSGAtwo. Possibly, our result also indicates that more algorithm users should switch from the \NSGAtwo, still the dominant algorithm in practice, to the \NSGA. We note that the latter has as additional parameter the number of reference points, but the general recommendation to use by a small factor more reference points than the size of the Pareto front (or, in the case of approximate solutions, the size of the desired solution set) renders it easy to choose this parameter. We note that our results support this parameter choice, our proven guarantees also hold from the point on when the number of reference points is a small constant factor larger than the Pareto front. We note that using more reference points does not significantly increase the runtime (not at all when counting fitness evaluations and only moderately when counting wall-clock time), so in any case the choice of this parameter appears not too critical. 

\section{Previous Work}
For the sake of brevity, we do not further discuss empirical and practical works here.
Since the beginning of the century, mathematical runtime analyzes have been employed in order to gain insights and prove bounds on the running time of multi-objective randomized search heuristics \cite{LaumannsTZWD02,Giel03,Thierens03}.  
At first, research focused on analyzing simple, synthetic algorithms like the SEMO and the global SEMO (GSEMO).
Though in practical applications, usually more sophisticated algorithms are used, these analyses still led to useful insights.
Later, the runtimes of more realistic algorithms have been studied mathematically \cite{BrockhoffFN08,NguyenSN15,DoerrGN16,LiZZZ16,HuangZCH19,HuangZ20,BianZLQ23}.
Only recently, first mathematical runtime analyses of the \NSGAtwo on bi-objective benchmarks have appeared.
The first one of these proves a running time of $O(N n \log n)$ function evaluations on the \oneminmax benchmark and of $O(Nn^2)$ on the \lotz (\textsc{LeadingOnesTrailingZeroes}) benchmark, when employing a population of $N\ge 4(n+1)$ individuals \cite{ZhengLD22}.
A central observation in their proof is that this population size suffices to ensure that, once a solution for a point on the Pareto front is sampled, the population will always contain a solution with this objective value.
Employing a population size that exactly matches the size of the Pareto front does not suffice, as then, for an exponential time, the \NSGAtwo will miss a constant fraction of the Pareto front.
Nevertheless, a smaller population is still able to find good approximations of the Pareto front \cite{ZhengD22gecco}. 
Further, by assuming the objectives to be sorted identically, the required size of the population was reduced to $2(n+1)$ \cite{BianQ22}. 
The same work studies the \NSGAtwo when employing crossover, but does not improve the running time bounds of \cite{ZhengLD22}.
Also, it introduces a novel selection mechanism, improving the running time on the \lotz benchmark to $O(n^2)$.
Recently, the \NSGAtwo was studied on a multimodal benchmark \cite{DoerrQ23tec}. Very recently, also lower bounds were proven~\cite{DoerrQ23LB}, examples for the usefulness of crossover were found~\cite{DoerrQ23crossover,DangOSS23}, and a runtime analysis on a combinatorial optimization problem appeared \cite{CerfDHKW23}. 

Very recently, also lower bounds were proven and examples for the usefulness of crossover were found~\cite{DoerrQ23LB,DoerrQ23crossover,DangOSS23}. 
We note that all these results only cover bi-objective benchmarks.

The only mathematical runtime analysis of the \NSGAtwo on a benchmark consisting of more than two objectives gave a disillusioning result~\cite{ZhengD22arxivmany}. When run on the simple \oneminmax benchmark, a multi-objective version of the classic \onemax benchmark, and the number of objectives is some constant $m \ge 3$, the combined parent and offspring population can contain only $O(n)$ solutions with positive crowding distance. All other solutions have a crowding distance of zero, hence the selection between them is fully at random. Since this benchmark has a Pareto front size of order $n^{\lceil m/2 \rceil}$, at least such a population size is necessary when trying to compute the Pareto front. With these (necessary) parameters, almost all selection decisions are random, which easily implies that regularly Pareto optimal solution values are lost from the population. This easy argument suggests that the difficulties proven in that work are not restricted to the \oneminmax benchmark, but are likely to appear for many problems having at least three objectives.

For the simple SEMO, more results exist on benchmarks with more than two objectives. We describe them brief{}ly to ease the comparison, but we note that the very different main working principle -- keeping all solutions for which no dominating solution has been found -- makes this algorithm somewhat special and not always very practical. 
The first bounds on the expected number of function evaluations when optimizing the many-objective variants of \textsc{CountingOnesCountingZeroes} and \textsc{LeadingOnesTrailingZeroes}, \mcocz and \mlotz, are in $O(n^{m+1})$, for bit strings of length $n$ and $m$ objectives \cite{LaumannsTZ04}.
For \mcocz, the bound was later improved to $O(n^m)$, if $m>4$, and $O(n^3 \log n)$, if $m=4$ \cite{BianQT18ijcaigeneral}.
Further, the MOEA/D, an algorithm that decomposes a multi-objective problem into multiple single-objective problems which are then solved in a co-evolutionary manner, has been studied on \mcocz and \mlotz \cite{HuangZLL21}. 
As this algorithmic approach drastically differs from the one of the \NSGAtwo and \NSGA, we do not discuss these results in detail.

\section{Preliminaries}

We now define the required notation for multi-objective optimization as well as the considered objective functions and give an introduction to the \NSGA.

\subsection{Multi-objective Optimization}

For $m\in \N$, an \emph{$m$-objective function} $f$ is a tuple $(f_1,\ldots, f_m)$, where $f_i\colon \Omega\rightarrow \R$ for some search space~$\Omega$.
For all $x\in\Omega$, we define $f(x)=(f_1(x),\ldots, f_m(x))$.
Other than in single-objective optimization, there is usually no solution that minimizes all $m$ objective functions simultaneously.
For two solutions $x,y$, we say that $x$ \emph{dominates} $y$ and write $x\preceq y$ if and only if $f_j(x)\le f_j(y)$ for all $1\le j\le m$.
If additionally there is a $j_0$ such that $f_{j_0}(x)<f_{j_0}(y)$, we say that $x$ \emph{strictly dominates} $y$, denoted by $x\prec y$.
A solution is \emph{Pareto-optimal} if it is not strictly dominated by any other solution.
We refer to the set of objective values of Pareto-optimal solutions as the \emph{Pareto front}.
In our analyses, we analyze the number of function evaluations until the population covers the Pareto front, i.e., until for each value $p$ on the Pareto front the population contains a solution $x$ with $f(x)=p$.
For a vector $v=\colvec{v_1\\v_2\\v_3}$, we denote its length by 
\[|v|=\sqrt{v_1^2+v_2^2+v_3^2}.\]

\subsection{\threeOMM Benchmark}
We are interested in studying the \NSGA on a 3-objective function.
The \oneminmax function, first proposed by \cite{GielL10}, translates the well-established \onemax benchmark into a bi-objective setting. 
It is defined as $\oneminmax\colon \{0,1\}^n\rightarrow \N\times \N$ by 
\begin{align*}
    \oneminmax(x) &= (\zeromax(x),\onemax(x))\\
    &= \left(n-\sum_{i=1}^nx_i, \sum_{i=1}^nx_i\right)
\end{align*}
for all $x=(x_1,\ldots,x_n)\in\{0,1\}^n$.

We call its translation into a 3-objective setting \threeOMM (for 3-\oneminmax).
For even $n$, we define $\threeOMM\colon \{0,1\}^n\rightarrow \N^3$ by
\begin{align*}
 \threeOMM(x)
   = \left(n-\sum_{i=1}^n x_i,
          \sum_{i=1}^{n/2} x_i,
          \sum_{i=n/2+1}^{n} x_i\right)
\end{align*}
for all $x=(x_1,\ldots,x_n)\in\{0,1\}^n$.

\subsection{NSGA-III}
The main structure of the \NSGA \cite{DebJ14} is identical to the one of the \NSGAtwo \cite{DebPAM02}.
It is initialized with a random population of size $N$.
In each iteration, the user applies mutation and/or crossover operators to generate an offspring population of size $N$.
As the NSGA framework is an MOEA with a fixed population size, out of this total of $2N$ individuals, $N$ have to be selected for the next iteration.

Because non-dominated solutions are to be preferred, the following ranking scheme is used to set the dominance relation as the predominant criterion for the survival of individuals. 
Individuals that are not strictly dominated by any other individual in the population obtain rank 1.
Recursively, the other ranks are defined. Each individual that has not yet been ranked and is only strictly dominated by individuals of rank \(1,\ldots,k-1\) is assigned rank $k$. 
Clearly, an individual is more interesting the lower its rank is. 
Let $F_i$ denote the set of individuals with rank $i$ and let $i^*$ be minimal such that $\sum_{i=1}^{i^*} |F_i| \ge N$.
All individuals with rank at most $i^*-1$ survive into the next generation.
Further, $0<k\le N$ individuals of rank $i^*$ have to be selected for the next generation such that the new population is again of size $N$, and the next iteration can begin. 
The only difference between the \NSGAtwo and the \NSGA is the procedure of selecting the $k$ individuals of rank $i^*$.
While the \NSGAtwo employs crowding-distance, the \NSGA uses reference points, typically distributed in some structured manner on the normalized hyperplane, in order to select a diverse population.
For the whole framework, see Algorithm~\ref{alg:nsga3}. 
Note that whenever we refer to sets of individuals, we are actually referring to multi-sets as each solution might be represented multiple times in the population.
\begin{algorithm2e}[t]%
Let the initial population $P_0$ be composed of $N$ individuals chosen independently and uniformly at random from $\{0,1\}^n$.

\For{$t = 0, 1, 2, \ldots$}{
Generate offspring population $Q_t$ with size $N$

Use fast-non-dominated-sort() from Deb et al.\ \cite{DebPAM02}) to divide $R_t = P_t \cup Q_t$ into $F_1, F_2, \ldots$

Find $i^* \ge 1$ such that $\sum_{i=1}^{i^*-1} |F_i| < N$ and $\sum_{i=1}^{i^*} |F_i| \ge N$

$Z_t = \bigcup_{i=1}^{i^*-1}F_i$

Select $\Tilde{F_{i^*}}\subseteq F_{i^*}$ such that $|Z_t\cup\Tilde{F_{i^*}}| = N$ (use crowding-distance for \NSGAtwo and Algorithm~\ref{alg:selection} for \NSGA)

$P_{t+1} = Z_t \cup \Tilde{F_{i^*}}$
}
\caption{NSGA-II and NSGA-III}
\label{alg:nsga3}
\end{algorithm2e}%

In order to select individuals from the critical rank~$i^*$, the \NSGA normalizes the objective functions and associates each individual with a reference point.

Regarding the normalization step, we do not consider the procedure as given in \cite{DebJ14} but the improved and more detailed normalization given in \cite{Blank_Deb_Roy_2019} by one of the two original authors among others.
Consider any iteration.
Let $\hat{z}_j^*$ be the minimum observed value in the $j$th objective over all generations including the current offspring. 
We use $\hat{z}_j^*$ to estimate the ideal point.
Further, for each objective $j$ we compute an extreme point in that objective by using a achievement scalarization function. 
Consider the hyperplane spanned by these points.  
The intercepts of this hyper plane with the coordinate axes give the Nadir point estimate $\hat{z}^{\nad}$.
In case that $H$ is not well-defined by the extreme points or if an intercept is either smaller than a given positive threshold $\epsilon_{\nad}$ or larger than the highest observed value over all generations in that objective, $\hat{z}^{\nad}$ is instead defined by the maximum value in each objective of individuals in the first non-dominated front.
Last, for each objective in which the Nadir point estimate is smaller than the ideal point estimate plus the threshold $\epsilon_{\nad}$, the maximum value in that objective over all current non-dominated fronts is used for the Nadir point estimate in that objective instead.
The normalized objective functions $f^n$ are now defined as 
 \begin{equation}\label{eq:normalization}
     f_j^n(x) = \frac{f_j(x)-\hat{z}_j^*}{\hat{z}_j^{\nad}-\hat{z}_j^*}
 \end{equation} 
 for $j\in \{1,\ldots,M\}$.
Algorithm~\ref{alg:normalize} formalizes the normalization procedure, for a more detailed description see \cite{Blank_Deb_Roy_2019}. We note that Blank et al.\ span the hyperplane $H$ by the extreme points \emph{after} subtracting the ideal point estimate $\hat{z}^*$, while in the interest of a clear notation we span $H$ by the original extreme points. This leads some individual lines slightly differing from  \cite{Blank_Deb_Roy_2019}, though the described algorithm is identical.
\begin{algorithm2e}[ht]%
\Input{%
$f=(f_1,\ldots, f_M)$: objective function\\
$z^*\in \R^M$: observed min.\ in each objective\\
$z^w\in \R^M$: observed max.\ in each\\ objective\\
$E\subseteq \R^M$: extreme points of previous\\iteration, initially $\{\infty\}^M$}
\For{$j=1$ \KwTo $M$}{
    $\hat{z}_j^* = \min \{z_j^*, \min_{z\in Z} f_j(z)\}$\\
    Determine an extreme point $e^{(j)}$ in the $j$th objective from $Z\cup E$ using an achievement scalarization function
}
$\valid = \mFalse$\\
\If{$e^{(1)},\ldots, e^{(M)}$ \emph{are linearly independent}}{
    $\valid = \mTrue$\\
    Let $H$ be the hyperplane spanned by $e^{(1)},\ldots, e^{(M)}$\\  
    \For{$j=1$ \KwTo $M$}{
        $I_j = $ the intercept of $H$ with the $j$th objective axis\\
        \If{$I_j < \epsilon_{\nad}$ \emph{or} $I_j > z^w_j$}{
            $\hat{z}_j^{\nad} = I_j$
        } \Else{
           $valid = \mFalse$\\
          \Break
        }
    }
} 
\If{$valid = \mFalse$}{ 
    \lFor{$j'=1$ \KwTo $M$}{$\hat{z}_j^{\nad} = \max_{x\in F_1} f_j(x)$}
}
\For{$j=1$ \KwTo $M$}{
    \lIf{$\hat{z}_j^{\nad} < \hat{z}_j^* + \epsilon_{\nad}$}
       {$\hat{z}_j^{\nad} = \max_{x\in F_1\cup\ldots\cup F_k} f_j(x)$}
}
Define $f^n_j(x) = (f_j(x)-\hat{z}_j^*) / (\hat{z}_j^{\nad}-\hat{z}_j^*) \quad \forall x\in\{0,1\}^n, j\in\{1,\ldots,M\}$
\caption{Normalization as given by [Blank et al., 2019]}
\label{alg:normalize}
\end{algorithm2e}%

After the normalization, each individual of rank at most $i^*$ is associated with its closest reference point with respect to the normalized objectives. 
More precisely, an individual $x$ is associated to the reference point $\refp(x)$ that minimizes the angle between the point vectors of $x$ and $\refp(x)$. That is, $\refp(x)$ is the reference point such that the distance between $x$ and the line passing through the origin and $\refp(x)$ is minimal.
Then, one iterates through the reference points, always selecting the one with the fewest associated individuals that are already selected for the next generation. Ties are resolved randomly.
If the reference point only has associated individuals that are already selected, it is skipped.
Otherwise, among the not yet selected individuals the one closest to the reference point (with respect to the normalized objective function) is selected for the next generation. 
Once more, ties are resolved randomly.
If the next generation already contains an individual that is associated with the reference point, other measures than the distance to the reference point can be considered.
The selection terminates as soon as the required number of individuals is reached.
This procedure is formalized in Algorithm~\ref{alg:selection}.
\begin{algorithm2e}[tbh]%

\Input{$Z_t$: the multi-set of already selected individuals\newline
$F_{i^*}$: the multi-set of individuals to choose from}

$f^n = \textsc{Normalize}(f, Z=Z_t\cup F_{i^*})$ using Algorithm~\ref{alg:normalize}

Associate each individual $x\in Z_t \cup F_{i^*}$ to the reference point $\refp(x)$ 

For each reference point $r\in R$, let $\rho_r$ denote the number of (already selected) individuals in $Z_t$ associated with $r$

$R' = R$

$\Tilde{F_{i^*}} = \emptyset$

\While{$\mTrue$}{
    Let $r_{\min}\in R'$ be such that $\rho_{r_{\min}}$ is minimal (break ties randomly) 
    
    Let $x_{r_{\min}}\in F_{i^*} \setminus  \Tilde{F_{i^*}}$ be the individual that is associated with $r_{\min}$ and minimizes the distance between $f^n(x_{r_{\min}})$ and $r_{\min}$ (break ties randomly)\footnotemark
    
    \If{$x_{r_{\min}}$ \emph{exists}}{
    
        $\Tilde{F_{i^*}} = \Tilde{F_{i^*}} \cup \{x_{r_{\min}}\}$

        $\rho_{r_{\min}} = \rho_{r_{\min}} +1 $
    
        \If{$|S_t|+|\Tilde{F_{i^*}}|=N$}{
            \Breakall and \Return $\Tilde{F_{i^*}}$
        }
    }
    \Else{
        $R' = R' \setminus \{r\}$
    }
}
\caption{Selection based on a set $R$ of reference points when maximizing the function $f$}
\label{alg:selection}
\end{algorithm2e}%
\footnotetext{If $\rho_{r_{\min}}>0$, $x_{r_{\min}}$ can be selected in any other diversity-preserving manner from the associated individuals.}
\FloatBarrier

For our analyses, we assume that the \NSGA employs a set of structured reference points in the normalized hyperplane as proposed by Deb and Jain \cite{DebJ14}.
In the case of 3 objectives, this corresponds to a set of points in the triangle spanned by $\colvec{1\\0\\0}, \colvec{0\\1\\0},$ and $\colvec{0\\0\\0}$.
Divide the lines between two pairs of these points into $p$ divisions of equal length.
Consider the lines that pass through the start and end points of all divisions and are orthogonal to the respective side. 
Then, a reference point is placed at every intersection of these lines, see Figure~\ref{fig:referencePoints}.
By \cite[Equation~3]{DebJ14}, this creates $\binom{3+p-1}{p}=\binom{p+2}{2}$ reference points.
Observe that these reference points partition the non-negative domain of the spanned triangle in regular hexagonal Voronoi cells.

\begin{figure}
    \centering
    \includegraphics[width=.45\textwidth]{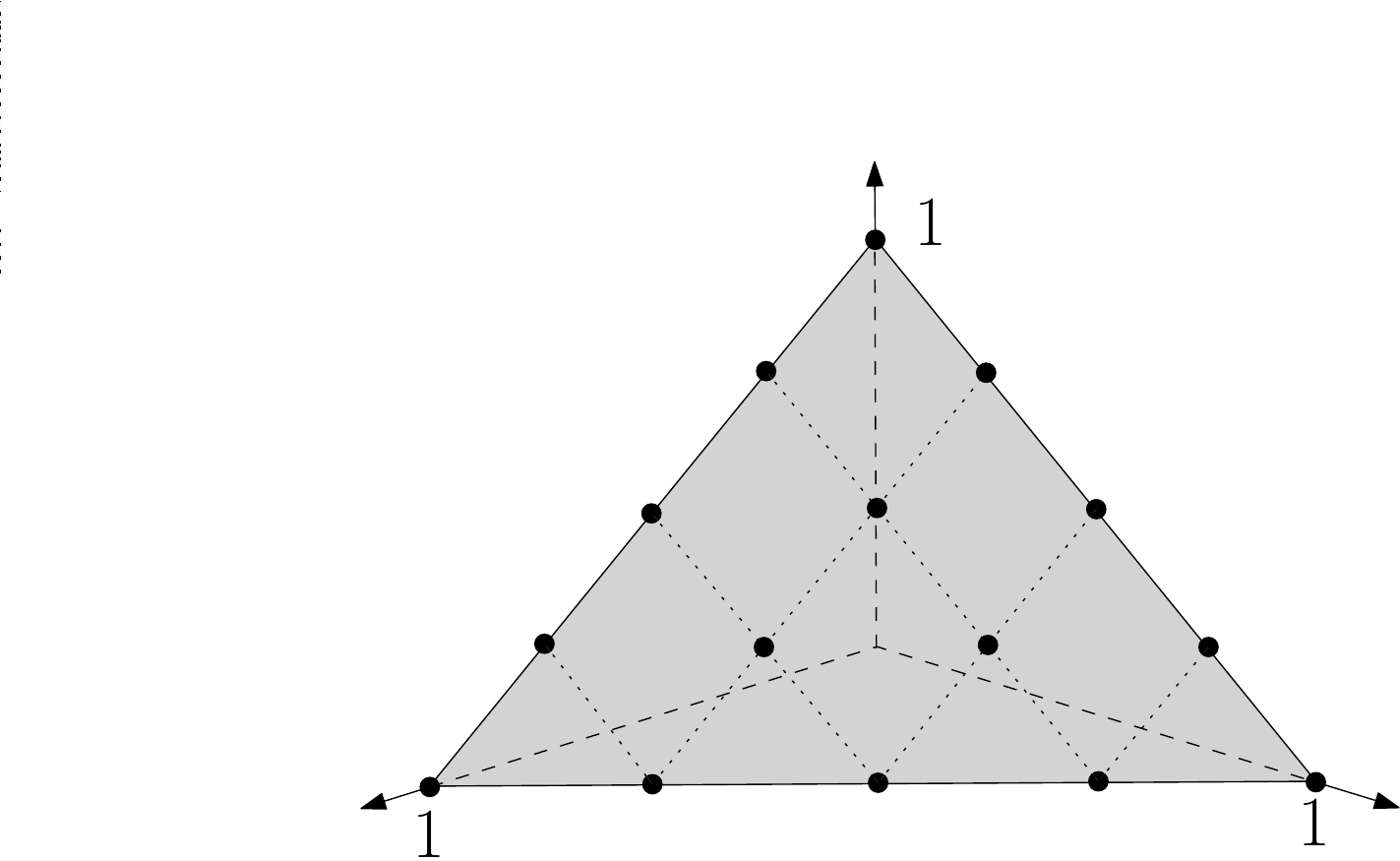}
    \caption{Structured set of reference points for 3 objectives with $p=4$ divisions based on [Deb and Jain, 2014, Figure~1].}
    \label{fig:referencePoints}
\end{figure}

\section{How the Reference Point Mechanism Avoids Losing Solution Values}

Before analyzing the optimization time of the \NSGA on \threeOMM, we show that, by employing sufficiently many reference points, once the population covers a point on the Pareto front, it is covered for all future iterations.
To this end, we first analyze how the normalization shifts the points on the Pareto front to then conclude that every reference point is associated with at most one point of the Pareto front.
With this, we argue that already sampled points on the Pareto front are never lost again. 

Our analysis assumes that the population is non-degenerated, i.e., that for each objective the maximum value over all generations is larger than the minimum value over all generations. 
However, w.h.p. this holds starting in the first iteration.
We note that for \threeOMM no individual strictly dominates another. Thus, all individuals are in the first rank and we analyze on the way ties inside that rank are broken, focusing on the only difference between the \NSGAtwo and the \NSGA.

\begin{lemma}\label{lem:normalization}
    Assume that for each objective value observed in a previous iteration, the current population contains at least one individual with that objective value. 
    Let for each objective $i$, $z_i^{\min}$ and $z_i^{\max}$ be the minimum and maximum value in the current population.
    Then, each objective $i$ is normalized as $f^n_i(x) = \frac{f_i(x)-z_i^{\min}}{z_i^{\max} - z_i^{\min}}$.
\end{lemma}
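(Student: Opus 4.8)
The plan is to trace the normalization procedure of Algorithm~\ref{alg:normalize} and to show that, under the stated assumption together with the structure of \threeOMM, the ideal-point estimate collapses to $z_i^{\min}$ and the Nadir-point estimate collapses to $z_i^{\max}$, after which the claimed formula is exactly Equation~\eqref{eq:normalization}. First I would treat the ideal point. The algorithm sets $\hat z_j^* = \min\{z_j^*, \min_{z \in Z} f_j(z)\}$, where $z_j^*$ is the smallest value seen in objective $j$ in any previous generation and $Z = Z_t \cup F_{i^*}$ is the current combined population. By assumption the current population realizes every previously observed value, so in particular $\min_{z \in Z} f_j(z) \le z_j^*$; conversely $z_j^*$ bounds this minimum from below, since the current values are themselves observed values. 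Hence $\hat z_j^* = \min_{z \in Z} f_j(z) = z_j^{\min}$. The identical argument shows that the observed maximum $z_j^w$ over all generations equals the current maximum $z_j^{\max}$, a fact I use below.

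The core of the proof, and the step I expect to be the main obstacle, is establishing $\hat z_j^{\nad} = z_j^{\max}$, since this requires controlling the extreme-point hyperplane and the branch logic around the validity check. Here I would exploit the defining property of \threeOMM that $f_1(x) + f_2(x) + f_3(x) = n$ for every $x$, so that all attainable objective values lie on the plane $P = \{v \in \R^3 : v_1 + v_2 + v_3 = n\}$. The extreme points returned by the achievement scalarization are points of $Z \cup E$; since $Z$ consists of solution values and the stored set $E$ is either the initial $\{\infty\}^M$ (never selected once genuine solutions are present, as it yields an infinite scalarization value) or consists of extreme points from earlier iterations, every selected extreme point lies on $P$. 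I then split into two cases. If $e^{(1)}, e^{(2)}, e^{(3)}$ are linearly dependent, the procedure immediately sets $\valid = \mFalse$. If they are linearly independent, they are also affinely independent, so the hyperplane they span is exactly $P$, whose intercept with every axis equals $n$; since $f_2, f_3 \le n/2 < n$ we get $I_2 = n > n/2 \ge z_2^{\max} = z_2^w$, so the intercept leaves the admissible range and the fallback is triggered (at the latest for objective $2$). In either case the fallback assigns $\hat z_j^{\nad} = \max_{x \in F_1} f_j(x)$ for all $j$.

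Finally, since for \threeOMM no individual strictly dominates another, the whole population has rank one, so $F_1$ (and likewise $F_1 \cup \dots \cup F_k$) is the entire current population and $\max_{x \in F_1} f_j(x) = z_j^{\max}$. It remains to check that the last correction loop cannot perturb this. By the non-degeneracy assumption and the integrality of the objective values we have $z_j^{\max} \ge z_j^{\min} + 1 \ge \hat z_j^* + \epsilon_{\nad}$ whenever $\epsilon_{\nad} \le 1$, so the condition $\hat z_j^{\nad} < \hat z_j^* + \epsilon_{\nad}$ is never satisfied; and even if it were, it would reset $\hat z_j^{\nad}$ to $\max_{x \in F_1 \cup \dots \cup F_k} f_j(x) = z_j^{\max}$ as well, so no harm is done. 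Substituting $\hat z_j^* = z_j^{\min}$ and $\hat z_j^{\nad} = z_j^{\max}$ into Equation~\eqref{eq:normalization} yields the claim, with non-degeneracy guaranteeing that the denominator $z_j^{\max} - z_j^{\min}$ is positive.
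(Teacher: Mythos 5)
Your proof is correct and follows essentially the same route as the paper's: identify the ideal-point estimate with the componentwise minimum via the assumption, then show the Nadir estimate equals the componentwise maximum because all objective values of \threeOMM lie on the plane $v_1+v_2+v_3=n$, whose axis intercepts of $n$ exceed the observed maxima in objectives $2$ and $3$ (at most $n/2$), forcing the fallback, which returns the population maxima since every individual is non-dominated. Your treatment is somewhat more thorough than the paper's on minor points (the induction showing extreme points from $E$ lie on the plane, linear implying affine independence so $H=P$, and the final $\epsilon_{\nad}$ correction loop being harmless), but these are elaborations of the same argument rather than a different approach.
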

\begin{proof}
   Note that due to the assumption in the statement the $z_i^{\min}$ and $z_i^{\max}$ are the extreme values not only for the current population but also among all previous generations.
   Thus, $\colvec{z_1^{\min}\\z_2^{\min}\\z_3^{\min}}$ is exactly the ideal point estimate $\hat{z}^*$.
   Further, $\colvec{z_1^{\max}\\z_2^{\max}\\z_3^{\max}}$ is the Nadir-point estimate $\hat{z}^{\nad}$.
   This estimate is defined by the intercepts of the hyperplane spanned by the extreme points with the objective axes though its objective values might be set to the maximum of the respective objective among all individuals or all non-dominated individuals in the current generation.
   As for \threeOMM, all individuals are non-dominated, in the latter case the Nadir point estimate in each objective $j$ is exactly $z_j^{\max}$. 
   The first case only occurs if the extreme points describe a well-defined hyperplane and the intercepts of that hyperplane are at most the maximum observed value in the respective objective over all generations.
   Note that all possible objective values for \threeOMM lay in the same plane $E: v_1+v_2+v_3 = n$.
   The intercept of this plane with each objective is $n$.
   As for \threeOMM, the second and third objective never take any value larger than $n/2$, this case never occurs.
   Thus, by Equation~\ref{eq:normalization}, $f^n_i(x) = (f_i(x)-z_i^{\min})/(z_i^{\max} - z_i^{\min})$.
\end{proof}

\begin{lemma} \label{lem:uniqueAssociations}
    By employing at least $p=21n$ divisions along each objective, all
individuals that are associated with the same reference point have the same
objective value.
\end{lemma}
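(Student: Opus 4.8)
The plan is to combine Lemma~\ref{lem:normalization} with an explicit geometric estimate showing that, after normalization, two individuals with different \threeOMM values point in sufficiently different directions (as seen from the origin) that no single reference point can be the angle-minimizer for both. Since every solution of \threeOMM lies in the plane $v_1+v_2+v_3=n$, an objective value is determined by the pair $(a,b)=(f_2(x),f_3(x))\in\{0,\dots,n/2\}^2$, with $f_1=n-a-b$. By Lemma~\ref{lem:normalization} the normalization is the coordinatewise affine map $f_i^n=(f_i-\zmin_i)/(\zmax_i-\zmin_i)$; since the association of $x$ to $\refp(x)$ depends only on the \emph{direction} of $f^n(x)$ (all of whose coordinates are nonnegative, so all directions lie in the positive octant spanned by the reference grid), I would radially project each normalized value onto the unit simplex and argue entirely with angles, using that the angle between unit directions is a metric and hence satisfies the triangle inequality.

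First I would reduce to a single worst case for the normalization. The per-unit normalized increment in objective $i$ equals $1/(\zmax_i-\zmin_i)$, which is smallest -- hence the normalized lattice is most compressed and the directions least separated -- exactly when each range is as large as possible, i.e.\ $\zmax_1-\zmin_1=n$ and $\zmax_2-\zmin_2=\zmax_3-\zmin_3=n/2$. In this extremal case $f^n(x)=\bigl((n-a-b)/n,\,2a/n,\,2b/n\bigr)$. A short computation shows that distinct pairs $(a,b)\neq(a',b')$ never yield parallel vectors (equating $f^n$ up to a scalar $\lambda$ forces $\lambda=1$), so uniqueness is achievable in principle and only a quantitative separation bound is missing.

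Next I would establish the two quantitative estimates. For the \emph{separation}, I would show that over all distinct $(a,b),(a',b')$ the angle $\angle\bigl(f^n(x),f^n(x')\bigr)$ is at least some $c_1/n$: the projected values form a distorted $(\tfrac n2{+}1)\times(\tfrac n2{+}1)$ lattice of angular extent $\Theta(1)$, so neighbouring values already differ by $\Theta(1/n)$ in direction, and one must check that the radial projection brings no far-apart pair close. For the \emph{resolution}, the $\binom{p+2}{2}$ structured reference points form a triangular grid of spacing $\Theta(1/p)$ on the simplex, so the angular covering radius $\rho(p)=\max_{w}\min_{r\in R}\angle(w,r)$ satisfies $\rho(p)\le c_2/p$. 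If two individuals with different values were associated to the same $r$, the triangle inequality would give $\angle(f^n(x),f^n(x'))\le \angle(f^n(x),r)+\angle(r,f^n(x'))\le 2\rho(p)$, contradicting the separation bound as soon as $2c_2/p<c_1/n$; choosing $p=21n$ is exactly what makes these constants work.

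The main obstacle is the uniform lower bound on the angular separation. It is not enough to inspect adjacent lattice points: after the radial projection the map $(a,b)\mapsto f^n(x)/\lvert f^n(x)\rvert$ is nonlinear, so I must rule out that two values with very different $(a,b)$ are squeezed into almost the same direction, and I must do so with constants good enough to beat $2\rho(p)$. Carefully justifying the reduction to the full-range normalization (the angular separation is not literally monotone in each range, since rescaling an axis also changes $\lvert f^n\rvert$) and tracking the hexagonal covering radius precisely are the delicate parts of the argument, and they are what ultimately fix the constant factor $21$ in $p=21n$.
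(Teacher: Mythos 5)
Your overall skeleton coincides with the paper's proof: upper-bound the angle between a normalized value and its nearest reference point by $O(1/p)$, lower-bound the angle between two distinct normalized objective values by $\Omega(1/n)$, and conclude via the triangle inequality for angles that for $p\ge 21n$ no reference point can be the angle-minimizer for two different values. But the proposal stops exactly where the lemma's content begins: both quantitative estimates are left as declared goals (``at least some $c_1/n$'', ``$\rho(p)\le c_2/p$'') with no constants ever derived, you yourself label the uniform separation bound ``the main obstacle'' without resolving it, and the closing claim that $p=21n$ ``is exactly what makes these constants work'' is therefore unsupported. The paper's proof consists precisely of these computations: writing two points as $z\pm\Delta$, it shows the cosine of the angle to the nearest reference point is at least $1-18/p^2$ (scaling onto the simplex is angle-preserving, the grid triangles have side $\sqrt{2}/p$, and $|z|^2\ge 1/9$ on the simplex), the cosine between distinct normalized values is at most $1-1/(6n^2)$ (worst case $\Delta$ of the form $(1/(2n),-1/(2n),0)$, and $|z|\le\sqrt{3}$), and then verifies $\cos^{-1}(1-1/(6n^2)) > 2\cos^{-1}(1-18/p^2)$ at $n=2$, $p=42$, extending to larger $n$ by the steepening slope of $\cos^{-1}$.

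Moreover, the one concrete reduction you do propose -- passing to the maximal-range normalization $\zmax_1-\zmin_1=n$, $\zmax_2-\zmin_2=\zmax_3-\zmin_3=n/2$ -- is a step that would fail as stated, as you concede: the angle between two normalized values is not monotone in the per-axis ranges, since rescaling an axis changes the norms as well as the coordinate gaps. The paper never makes this reduction. It works with arbitrary ranges and uses only two crude facts: every range is at most $n$, so a unit difference in an objective survives normalization as a coordinate difference of at least $1/n$; and all normalized values lie in $[0,1]^3$, so the midpoint $z$ of any two satisfies $|z|\le\sqrt{3}$. These alone yield the $1-1/(6n^2)$ bound uniformly over all achievable normalizations, making your delicate monotonicity argument unnecessary. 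Your worry about far-apart pairs being ``squeezed together by the radial projection'' likewise dissolves in the paper's treatment: the separation bound is computed directly from the normalized vectors (no projection is involved on that side), and the cosine bound $1-|\Delta|^2/|z|^2$ only improves as $|\Delta|$ grows, so it suffices to identify the closest incomparable pair, namely values differing by exactly $1$ in two objectives on the plane $v_1+v_2+v_3=n$. To turn your outline into a proof you would need to supply exactly these computations.
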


We note that we have not tried to optimize the constant $21$ as the number of reference points does not have a large influence on the runtime. 

\begin{proof}
    Each individual $x$ is associated to the reference point $r$ that minimizes the distance between $f^n(x)$ and the line between the origin and $r$.
    Equivalently, $x$ is associated to the reference point $r$ that minimizes the angle between the point vectors $r$ and $f^n(x)$. 
    To show the statement, we first upper bound the angle between normalized objective values and their nearest reference point and then lower bound the angle between two different normalized objective values to show that for $p \ge 21n$ the latter is more than twice as large as the former.
    Thereby, individuals with different objective values are never associated with the same reference point because if they were then the angle between their normalized objective values would be at most the sum of their angles to that reference point.

    To upper bound the angle between the vectors of $f^n(x)$ and $r$, first note that when scaling $f^n(x)$ to $t=a\cdot f^n(x)$ such that $t$ lies in the non-negative domain of the reference point plane ($\sum_{i=1}^3 t_i =1$), the angle between $t$ and $r$ is the same as between $f^n(x)$ and $r$.
    When employing $p$ divisions, the reference points partition the non-negative domain of the reference point plane into equilateral triangles with side length $\sqrt{2}/p$.
    As $t$ lies in one of these triangles (including the boundary), there is a reference point $r$, namely a corner of the triangle,  such that $|t-r|\le \sqrt{2}/p$.

    Consider the point $z=(t+r)/2$ exactly in between $t$ and $r$ and $\Delta = t-z$.
    Then, $t = z+\Delta$ and $r=z-\Delta$.
    Thus, the angle between the point vectors $t$ and $r$ is at most $\cos^{-1}(\frac{(z+\Delta)\circ (z-\Delta)}{|(z+\Delta)|\cdot |(z-\Delta)|})$.
    As $|z|^2 \ge 1/9$ 
    because $\sum_{i=1}^3 z_i=1$ and $|\Delta| \le \sqrt{2}/p$, we have
    \begin{align*}
       &\frac{(z+\Delta)\circ (z-\Delta)}{|(z+\Delta)|\cdot |(z-\Delta)|}\\
     &\quad = \frac{\sum_{i=1}^3 (z_i+\Delta_i)(z_i-\Delta_i)}{\sqrt{\sum_{i=1}^3 (z_i+\Delta_i)^2}\sqrt{\sum_{i=1}^3 (z_i-\Delta_i)^2}}\\
     &\quad \ge \frac{\sum_{i=1}^3 z_i^2 - \Delta_i^2}{\sum_{i=1}^3 z_i^2 }\\
     &\quad = \frac{|z|^2 - |\Delta|^2}{|z|^2}
     = 1 - \frac{|\Delta|^2}{|z|^2}
     \ge 1 - \frac{18}{p^2}.
    \end{align*}
    Since the function $\cos^{-1}$ is decreasing, this yields that the angle between the vectors is at most $\cos^{-1}(1-18/p^2)$.

    Next, consider any two individuals objective values $x,y$ with $f(x)\neq f(y)$.
    Let $\zmax$ and $\zmin$ be the points having the maximum and minimum value in each objective across the current population, respectively.
    Then, by Lemma~1, for each objective $i$
    \begin{align*}
        f^n_i(x)&= \frac{f_i(x)-\zmin_i}{\zmax_i-\zmin_i}
        \intertext{and}
        f^n_i(y)&= \frac{f_i(y)-\zmin_i}{\zmax_i-\zmin_i}.
    \end{align*}
    Consider the point  $z = (f^n(x) + f^n(y))/2$ exactly in between $f^n(x)$ and $f^n(y)$ and let $\Delta = f^n(x) - z$.
    Then, $f^n(x) = z+ \Delta$ and $f^n(y) = z-\Delta$.
    Thus, the angle between the point vectors of $f^n(x)$ and $f^n(y)$ is $\cos^{-1}(\frac{(z+\Delta)\circ (z-\Delta)}{|(z+\Delta)|\cdot |(z-\Delta)|})$. We have
\begin{align*}
   & \frac{(z+\Delta)\circ (z-\Delta)}{|(z+\Delta)|\cdot |(z-\Delta)|}\\
   & = \frac{|z|^2 - |\Delta|^2}{\sqrt{|z|^2+|\Delta|^2+2\sum_{i=1}^3 z_i\Delta_i}}\\
    &\quad \cdot \frac{1}{\sqrt{|z|^2+|\Delta|^2-2\sum_{i=1}^2 z_i\Delta_i}}\\
   & = \frac{|z|^2 - |\Delta|^2}{\sqrt{(|z|^2+|\Delta|^2)^2-(2\sum_{i=1}^3 z_i\Delta_i)^2}}.
\end{align*}
The critical cases occur if $f^n(x)$ and $f^n(y)$ are as close to another as possible as this gives the smallest angle. 
As $x$ and $y$ are not comparable, this happens if the objective values differ by exactly 1 in two objectives before normalization.
After normalization, a difference of 1 in any objective corresponds to a difference of at least $1/n$ in that objective. 
Thus, the extreme case we have to consider is $\Delta = \colvec{1/(2n)\\-1/(2n)\\0}$ and its variants like $\Delta = \colvec{1/(2n)\\0\\-1/(2n)}$ or $\Delta = \colvec{-1/(2n)\\1/(2n)\\0}$.
By symmetry, it suffices in the following to discuss the case $\Delta = \colvec{1/(2n)\\-1/(2n)\\0}$.
In this case, we have $|\Delta|^2=1/(2n^2)$, so the cosinus of the angle is at most 
\begin{align*}
    & \frac{|z|^2 - |\Delta|^2}{\sqrt{(|z|^2+\frac{1}{2n^2})^2-(2(\frac{z_1}{2n}-\frac{z_2}{2n}))^2}}\\
    &= \frac{|z|^2 - |\Delta|^2}{\sqrt{|z|^4+\frac{|z|^2}{n^2}+\frac{1}{4n^4}-(\frac{z_1}{n}-\frac{z_2}{n})^2}}\\
    &\le \frac{|z|^2 - |\Delta|^2}{\sqrt{|z|^4+\frac{|z|^2}{n^2}-\frac{z_1^2}{n^2}}}\\
    &\le \frac{|z|^2 - |\Delta|^2}{|z|^2}
     = 1- \frac{|\Delta|^2}{|z|^2} 
     = 1- \frac{1}{6n^2},
\end{align*}
 because $z_1^2 \le |z|^2$ and $z$ lies in the unit cube so $|z|\le \sqrt{3}$.

It remains to determine $p$ such that $\cos^{-1}(1- 1/(6n^2)) > 2 \cos^{-1}(1- 18/p^2)$.
If $n=2$ and $p \ge 21n$ , we have 
\begin{align*}
    \cos^{-1}(1- 1/(6n^2)) = \cos^{-1}(1- 1/24)\\
    > 2 \cos^{-1}(1- 18/42^2) \ge  2 \cos^{-1}(1- 18/p^2).
\end{align*}
For $n > 2$, we note that the slope of the $\cos^{-1}$ function at $1- 1/(6n^2)$ only gets steeper, so $p \ge 21n$  suffices for these cases as well.
Thus, when using $p \ge 21n$ divisions for the reference points, the angle between the normalized objective values of any pair of individuals is more than twice as large as the angle between any individual and its closest reference point, when considering their respective vectors from the origin.
Thus, no two individuals with different objective values are associated with the same reference point.
\end{proof}

We note that $21n$ divisions correspond to $\binom{21n+2}{2} \in \mathcal{O}(n^2)$ divisions, so the number of reference points differs by the size of the Pareto front by a constant factor.

Our bound of $p\ge 21n$ is likely not tight in order to guarantee unique associations, as suggested by our experiments in Section~\ref{sec:experiments}.
Nevertheless, it is only off by a constant factor to the actual bound. 
This holds as we require more than $n/\sqrt{2}$ divisions to create $(n/2 + 1)^2$ reference points, one for each possible objective value.  
If we do not have that many reference points, there are at least two different objective values associated with the same reference point and when selecting individuals from that reference point by chance no individuals of one of the objective values might survive.

Using Lemma~\ref{lem:uniqueAssociations}, we are now able to show that the population does not forget an objective value once it sampled a solution for it.

\begin{lemma}\label{lem:notLooseSolution}
Consider the \NSGA optimizing a multi-objective function $f$ with a population of size~$N$. 
Let $F^*$ be the Pareto front of $f$.
Assume that in each iteration the number of objective values of non-strictly dominated individuals is at most $N$ and that all individuals associated with the same reference point have the same objective value.
Then, once the population contains a solution $x$ with $f(x)\in F^*$, the population will contain a solution for $x'$ such that $f(x')=f(x)$ in all future iterations.
\end{lemma}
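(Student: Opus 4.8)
The plan is to reduce the statement to a single-iteration claim and then argue by induction: it suffices to show that whenever $P_t$ contains some $x$ with $f(x)\in F^*$, the next population $P_{t+1}$ contains some $x'$ with $f(x')=f(x)$. Fix such an $x$ and work in the combined multiset $R_t = P_t\cup Q_t$, which contains $x$ since $P_t\subseteq R_t$. The first thing I would record is that $x$ has rank~$1$: as $f(x)$ lies on the Pareto front, no solution strictly dominates $x$, in particular no member of $R_t$ does, so $x\in F_1$. I would then split on the critical rank $i^*$. If $i^*\ge 2$, then $F_1\subseteq Z_t=\bigcup_{i=1}^{i^*-1}F_i$, and since all of $Z_t$ survives into $P_{t+1}$ by Algorithm~\ref{alg:nsga3}, the individual $x$ itself survives and we take $x'=x$; this case uses neither hypothesis.

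The substantial case is $i^*=1$, where $|F_1|\ge N$ and the $N$ survivors are chosen from $F_1$ by the reference-point procedure of Algorithm~\ref{alg:selection}. Here $Z_t=\emptyset$, so every counter $\rho_r$ starts at $0$. Let $r=\refp(x)$. By the second hypothesis every individual associated with $r$ carries the objective value $f(x)$, so it suffices to show that the procedure selects at least one individual associated with $r$; any such individual is a valid $x'$. For the counting I would let $V$ be the set of distinct objective values occurring in $F_1$; the first hypothesis gives $|V|\le N$. Since the normalization is a fixed function of the objective value in this iteration, individuals sharing an objective value share their normalized value and hence their associated reference point, while the second hypothesis says a single reference point carries only one objective value. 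Thus the reference points having associated individuals, call this set $R^+$, are in bijection with $V$, so $|R^+|\le N$.

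The heart of the proof, and the step I expect to be the main obstacle, is the balancing invariant of Algorithm~\ref{alg:selection}. I would argue by contradiction: suppose $r$ never receives an individual, so $\rho_r=0$ throughout and $r$ always retains an unselected associated individual (it does, as $x$ is never picked), hence $r$ is never deleted from $R'$. Then at every step the minimum $\rho$-value over $R'$ equals $0$, so each chosen $r_{\min}$ has $\rho_{r_{\min}}=0$ at the moment of selection and is incremented to $1$. Consequently no reference point can ever reach $\rho=2$, so each reference point is charged at most one selection, and only the $|R^+|-1$ reference points other than $r$ can be charged (reference points with no associated individual are removed without a selection). Hence at most $|R^+|-1\le N-1$ individuals are selected before $r$ would be the forced minimiser. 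Since the procedure must reach $N$ selections and $N>|R^+|-1$, it cannot terminate first, so $r$ is eventually chosen and an associated individual with value $f(x)$ is selected, contradicting the assumption. The delicate point is to make this invariant rigorous despite random tie-breaking, which only decides \emph{which} minimiser is taken and never raises the minimum above $0$, and despite the removal of empty reference points; establishing that the $\rho$-values stay within distance one and that $r$ cannot be starved is where the real work lies. This yields the desired $x'$ and closes the induction.
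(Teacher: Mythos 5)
Your proof is correct and takes essentially the same route as the paper's: reduce to a single iteration, observe $x\in F_1$, split on whether $F_1$ survives wholesale, use the two hypotheses to bound the number of reference points with nonempty association sets by $N$, and conclude that the min-$\rho$ selection must serve every such reference point --- in particular $\refp(x)$ --- before terminating. The only difference is one of detail: the paper asserts in a single sentence that at least one individual is selected from each nonempty reference point, while you prove this balancing invariant explicitly (and correctly note that random tie-breaking is harmless, since it only chooses among minimisers and never lets a point with $\rho=1$ be picked while a $\rho=0$ point remains in $R'$).
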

\begin{proof}
Consider any iteration with a population that contains such an $x'$.
After the recombination and mutation step, the complete population of old solutions and offsprings contains $2N$ solutions.
Let $F_1$ denote the subset of solutions that are not strictly dominated.
Then, $x'\in F_1$.
If $|F_1| \le N$, all individuals in $F_1$ including $x'$ survive into the next generation.
Otherwise, the objective functions are normalized with respect to $F_1$ and the individuals in $F_1$ are associated with a reference point each.
By our assumptions, there are now at most $N$ reference points with at least one associated individual.
Thus, at least one individual is selected from each reference point with non-empty association set.
In particular, one of the individuals associated with the same reference point as $x'$ survives.
By our assumption, it has the same objective value as $x'$.
\end{proof}

\section{Runtime of the NSGA-III on \threeOMM}
We are now able to give a first upper bound on the expected optimization time of the \NSGA on \threeOMM. 
%

\begin{theorem}\label{thm:runTime3OMM}
Consider the \NSGA with population size $N \ge (\frac{n}{2}+1)^2$ and $p\ge 21n$ divisions along each objective optimizing \threeOMM. 
Assume the reproduction step to be such that each individual in the population has at least a chance $c^{-1}$ to create an offspring via standard mutation. 
Then, w.h.p., the population covers the Pareto front after $4ecn \ln(n)$ iterations, which corresponds to $O(cn^3 \log (n))$ evaluations of the fitness function.
\end{theorem}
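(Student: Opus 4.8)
The plan is to reduce the statement to a coverage-spreading argument on a two-dimensional grid and to control the spreading time by harmonic sums. I would first record the structural facts specific to \threeOMM: since $f_1(x)+f_2(x)+f_3(x)=n$ for every $x$, no solution strictly dominates another, so all $2N$ members of $R_t$ have rank~$1$, and the Pareto front is exactly the grid $\{(u,w): u,w\in\{0,\dots,n/2\}\}$, where $u$ and $w$ count the ones in the first and second half of $x$. Its size is $(\tfrac{n}{2}+1)^2\le N$. Hence the hypotheses of Lemma~\ref{lem:notLooseSolution} are met once I know that associations are unique, which follows from Lemma~\ref{lem:uniqueAssociations} (using $p\ge 21n$) together with the linear normalization of Lemma~\ref{lem:normalization}. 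The latter needs the minimum and maximum in each objective to be attained in the current population; I would discharge this by noting that non-degeneracy holds w.h.p.\ from iteration~$1$ and that the extreme objective values are themselves Pareto points, hence preserved by the very maintenance property I am proving. The upshot is the monotonicity I rely on throughout: once a grid point is covered, it stays covered forever.

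Given monotonicity, it remains to bound how fast the covered region grows to the full grid. The key local step is the following: if $(u,w)$ is covered and $(u+1,w)$ is not, then the representative of $(u,w)$ is chosen to reproduce with probability at least $c^{-1}$, and standard mutation turns it into a solution for $(u+1,w)$ by flipping one of the $n/2-u$ zero-bits in the first half and no other bit, which happens with probability at least $(n/2-u)\frac1n(1-\frac1n)^{n-1}\ge\frac{n/2-u}{en}$. Symmetric statements hold for the three other grid directions, with the number of available bits being $u$, $n/2-w$, or $w$. After initialization a Chernoff bound shows w.h.p.\ that some individual has $u,w$ near $n/4$, so I may start the spreading from a central point. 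To reach an arbitrary target $(u^*,w^*)$ I follow an axis-parallel path, first adjusting the first coordinate (keeping $w$ fixed) and then the second. The time to move one step when $k$ suitable bits are available is dominated by a geometric variable of success probability $\Omega(k/(cn))$, so the time to sweep one coordinate across its whole range is stochastically dominated by $\sum_{k=1}^{n/2}\Geom(\Omega(k/(cn)))$, whose expectation is $O(cn)\sum_{k=1}^{n/2}\frac1k=O(cn\log n)$. Summing the two coordinates yields the stated order $4ecn\ln n$, where the harmonic sum, and not the path length, is responsible for the $\log n$ factor.

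To turn the expectation into a high-probability statement over the entire front I would use a Chernoff-type tail bound for sums of independent geometric variables to show that each single-coordinate sweep finishes within $2ecn\ln n$ iterations with failure probability $n^{-\Omega(1)}$, and then take a union bound. The crucial observation that keeps the union bound affordable is that only the near-extreme steps (those with few available bits) are slow and contribute the $\log n$ factor, whereas interior steps have $\Omega(1)$ success probability and concentrate trivially; thus the genuinely hard targets lie on the boundary of the grid, of which there are only $O(n)$, so a union bound over $O(n)$ events, each failing with probability $n^{-\Omega(1)}$, suffices. Finally, each of the $O(n\log n)$ iterations creates $N=\Theta(n^2)$ offspring, giving the $O(cn^3\log n)$ fitness-evaluation bound.

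The step I expect to be the main obstacle is precisely this concentration-plus-union argument: making the constant $4e$ work requires the geometric-sum tail bound to beat the union bound, which hinges on correctly exploiting that the slow, low-probability steps are few (confined to the coordinate extremes) while the many fast steps are harmless, and on using the large population so that the boundary points progress in parallel rather than sequentially. A secondary but necessary technical point is to verify, throughout all $O(n\log n)$ iterations and not merely at a single one, that the normalization stays linear (Lemma~\ref{lem:normalization}), so that Lemma~\ref{lem:uniqueAssociations} and hence the no-loss property of Lemma~\ref{lem:notLooseSolution} remain valid; this requires arguing that the extreme objective values are present from the outset w.h.p.\ and are never lost.
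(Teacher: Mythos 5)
Your proposal is correct in its overall architecture, but it takes a genuinely different route from the paper. The paper does not spread coverage along axis-parallel paths from a central point: instead, for each fixed target value $(a,b)$ on the front it tracks $d_i$, the minimum number of bit flips separating the \emph{current population} from a solution with value $(a,b)$, observes that $d_i$ is monotone by Lemma~\ref{lem:notLooseSolution}, and that at distance $\ell$ there are at least $\ell$ single-bit flips of the closest individual that reduce the distance (combining both coordinates at once, so no per-coordinate sweeping is needed). This makes the waiting time one clean sum $\sum_{\ell=1}^{n}\Geom(\ell/(ecn))$, to which \cite[Theorem~1.10.35]{Doerr20bookchapter} applies with deviation parameter $\delta=3$, giving failure probability $n^{-3}$ per target; a crude union bound over all $(\frac n2+1)^2$ targets then suffices, with no need for your reduction to $O(n)$ boundary targets. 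Your route is sound in order of magnitude --- the path-coupling to independent geometrics, the monotone-coverage bootstrap for the normalization (same inductive argument the paper uses implicitly, since for \threeOMM every objective value is Pareto optimal and hence retained), and the boundary reduction (the two vertical sweeps per column do cover the whole grid) all work --- but it buys the exact constant $4e$ less easily than the paper's. Concretely: with the standard tail bound, a single sweep over $m\le n/2$ steps satisfies $\Pr[X\ge(1+\delta)ecn\ln m]\le m^{-\delta}$, so fitting each of your two \emph{sequential} sweeps into $2ecn\ln n$ forces $\delta\approx 1$ and per-sweep failure about $n^{-1}$, which a union bound over $\Theta(n)$ boundary targets does not beat; naively taking $\delta=3$ per sweep instead yields $8ecn\ln n$ total. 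The repair you gesture at does work --- split each sweep at $k=\sqrt n$, note the steps with $k>\sqrt n$ have success probability at least $1/(ec\sqrt n)$ and concentrate with superpolynomially small failure around $\tfrac12 ecn\ln n + O(cn)$, and spend the tail bound only on the $k\le\sqrt n$ portion, whose logarithm is $\tfrac12\ln n$, so a larger $\delta$ fits in the budget with failure $n^{-2+o(1)}$ --- but this bookkeeping is exactly what the paper's combined-distance formulation avoids: there the single harmonic sum over $\ell=1,\dots,n$ delivers $4ecn\ln n$ with $n^{-3}$ failure in one application of the theorem. In short: same probabilistic engine (no-loss property from Lemmas~\ref{lem:normalization}--\ref{lem:notLooseSolution}, harmonic sums of geometrics, tail bound plus union bound), different decomposition; yours gives a nice parallel-spreading picture but makes the stated constant the hard part, while the paper's per-target distance argument makes it automatic.
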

\begin{proof}
We upper bound the probability that after $4cen\ln (n)$ iterations not all objective values on the Pareto front have been sampled.
To this end, we first give an upper bound on the probability that any specific objective value $(a,b)$ has not been sampled after $4cen\ln (n)$ iterations.
In each iteration $i$, let $d_i = \min_{s\in \text{Population}} |f_1(s)-a| + |f_2(s)-b|$ be the distance of $(a,b)$ to the closest individual in the population, i.e., the minimum number of bitflips required to turn this individual into one with objective value $(a,b)$.
By Lemma~\ref{lem:notLooseSolution} the population will never lose a sampled objective value, so $d_i$ never decreases.
Further, for all  $1\le \ell \le n$, define the geometrically distributed random variable $X_\ell$ as the number of iterations $i$ with $d_i = \ell$.
Then, the number of iterations until $(a,b)$ is covered is $X = \sum_{\ell = 1}^n X_\ell$.
Observe that $X_\ell$ has a success probability of at least $p_i=\frac{i}{ecn}$ by choosing the closest individual for mutation ($\frac{1}{c}$), flipping any of the at least $i$ bits that take it closer ($\frac{i}{n}$), and not flipping any other bit ($\frac{1}{e}$).
By \cite[Theorem~1.10.35]{Doerr20bookchapter}, 
    $\Pr[X\ge (1+3)ec n \ln (n)] \le n^{-3}$, 
i.e., the probability that the population does not cover $(a,b)$ after $4ec n \ln (n)$ iterations is at most $n^{-3}$.
Hence, by the union bound, the probability that all $(\frac{n}{2}+1)^2$ objective values are sampled after $4ecn \ln (n)$ iterations is at least 
\begin{align*}
    1-\left(\frac{n}{2}+1\right)^2 \cdot n^{-3}
\ge 1- \frac{1}{n}.
\end{align*}
Thus, w.h.p.\ $4ec\ln (n)$ iterations suffice to cover the complete Pareto front. 
Each of these iterations employs $(\frac{n}{2}+1)^2$ fitness evaluations, so w.h.p. a total of $O(cn^3\ln (n))$ fitness evaluations are required.
\end{proof}

\section{Experimental Evaluation}\label{sec:experiments}
We support our theoretical findings with empirical evaluations of the \NSGA on the \threeOMM benchmark. 
To this end, we employ the DEAP library \cite{DEAP_JMLR2012}, which provides a framework for benchmarking evolutionary algorithms and holds an implementation of the \NSGA and \NSGAtwo.\footnote{See \url{https://github.com/SimonWiet/experiments_nsga3} for code and results of the experiments. We slightly modified the implementation of the \NSGAtwo as when computing the crowding distance and selecting individuals, the library sorts on the unmodified list of individuals. As it employs a stable sorting, this results in some bias on the order of individuals with same objective values or crowding distance. This bias resulted in significantly different performances on the \threeOMM benchmark. In our implementation, the respective lists are additionally shuffled in order to not distort the performance by these effects and stay as close to the intuitive interpretation of the original algorithm as possible.}
All experiments on the \NSGA are conducted with a population size of $N=(n/2+1)^2$ (the size of the complete Pareto front) and $p=4.65n$ divisions on the reference point plane.
While Theorem~\ref{thm:runTime3OMM} requires $p\ge 21n$, the theoretical proven bound is most likely not tight. 
In all our experiments we included a control routine to check whether any points on the Pareto front were lost during the process.
For $p=4.65n$, no point was ever lost, suggesting a much smaller bound than $p \ge 21n$. 
In the reproduction step, we apply standard bit mutation (flipping each bit independently with probability $1/n$) on each individual.

Figure~\ref{fig:coverages} shows the coverage of the Pareto front of \threeOMM in runs of the \NSGA and the \NSGAtwo. While all 3 runs of the \NSGA (with a population size equal to the size of the Pareto front) find the complete Pareto front in less than 300 iterations, the \NSGAtwo shows a real progress only in the first few iterations and then stagnates at some fraction of the Pareto front covered (with some variance).
Increasing the population size mildly increases this stagnation level, but even with a population of 8 times the size of the Pareto front, the \NSGA never has even 300 out of the 441 elements in the Pareto front. The figure shows that the effect of doubling the population size of the \NSGAtwo reduces with increasing population size, which suggests that a truly large population size would be needed in order to find the complete Pareto front in polynomial time.

The figure illustrates well that the advantage of the \NSGA over the \NSGAtwo lies in its property of never losing Pareto dominant solutions, which is reflected in the non-decreasing curve of the \NSGA in the figure as opposed to the oscillating curves of the \NSGAtwo. 

\begin{figure}
    \centering
    \includegraphics[width=\textwidth]{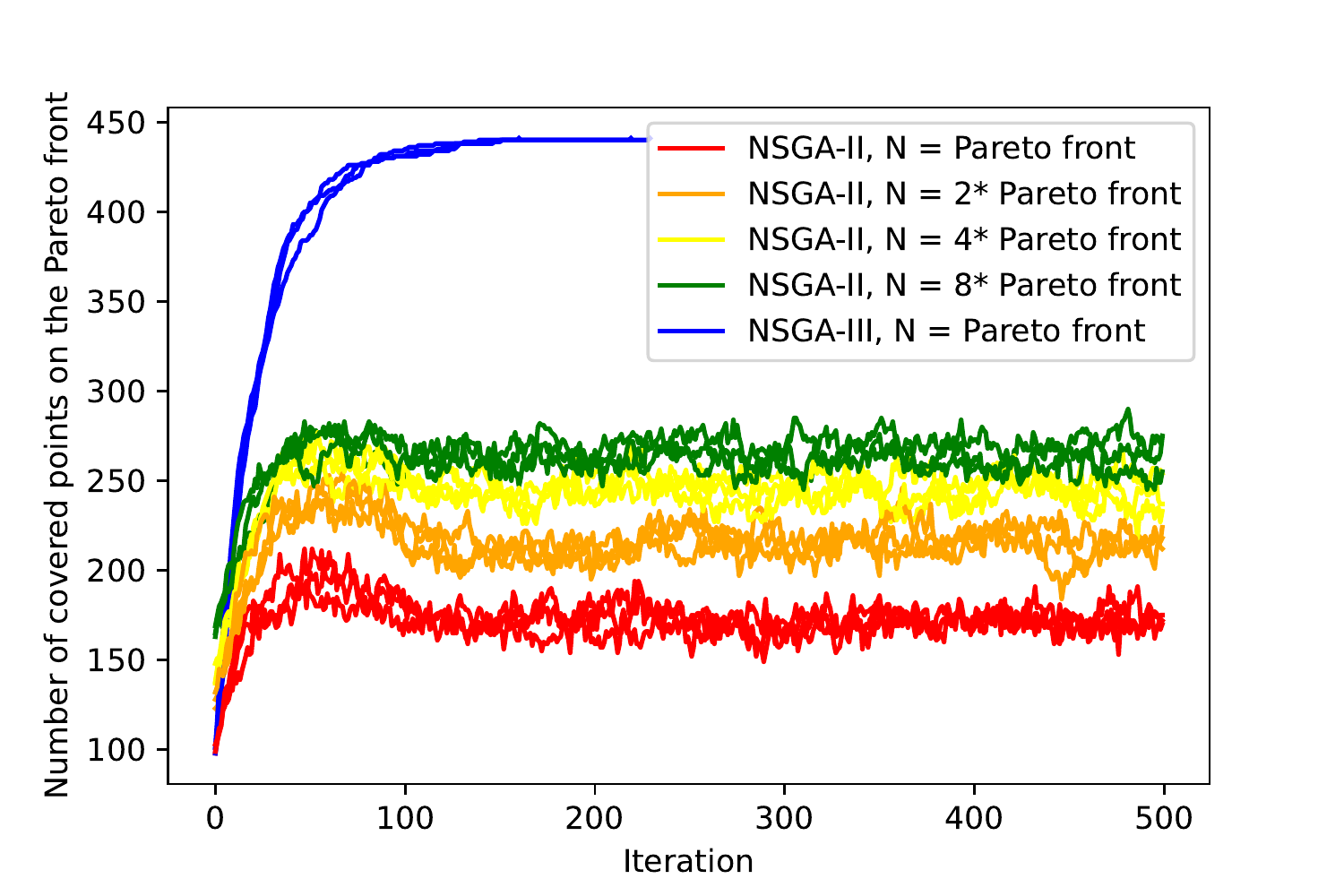}
    \caption{The number of covered points on the Pareto front per iteration of the NSGA-III and the NSGA-II with different population sizes based on the size of the Pareto front (441) when optimizing the 3-OMM benchmark. Three independent runs are depicted for each algorithm/population size. All algorithms only use mutation for the reproduction step. The lines for the NSGA-III (blue) stop after the first iteration in which the the complete Pareto front is covered.}
    \label{fig:coverages}
\end{figure}

Further, we run the NSGA-III on \threeOMM using not only standard bit mutation but also uniform crossover (for a pair of parent bit strings, at each bit position there is a chance of 1/2 of swapping the respective bits between the two parents). We randomly partitioned the population in pairs of 2 bit strings and for each pair tested a chance of $1/2$ or $9/10$ to perform a crossover on this pair before applying standard mutation.
Figure~\ref{fig:optimizationTimes} shows the number of iterations it took the \mbox{\NSGA} to find the complete Pareto front of the \mbox{\threeOMM} benchmark for the different crossover rates and over different values of the problem size $n$.
The data shows quite clearly that a crossover rate of $1/2$ moderately increases the runtime and that a crossover rate of $0.9$ significantly increases the runtime. 
This suggests that the main progress is due to the mutation operator, justifying our focus on standard bit mutation in our analysis.

\begin{figure}
    \centering
    \includegraphics[width=\textwidth]{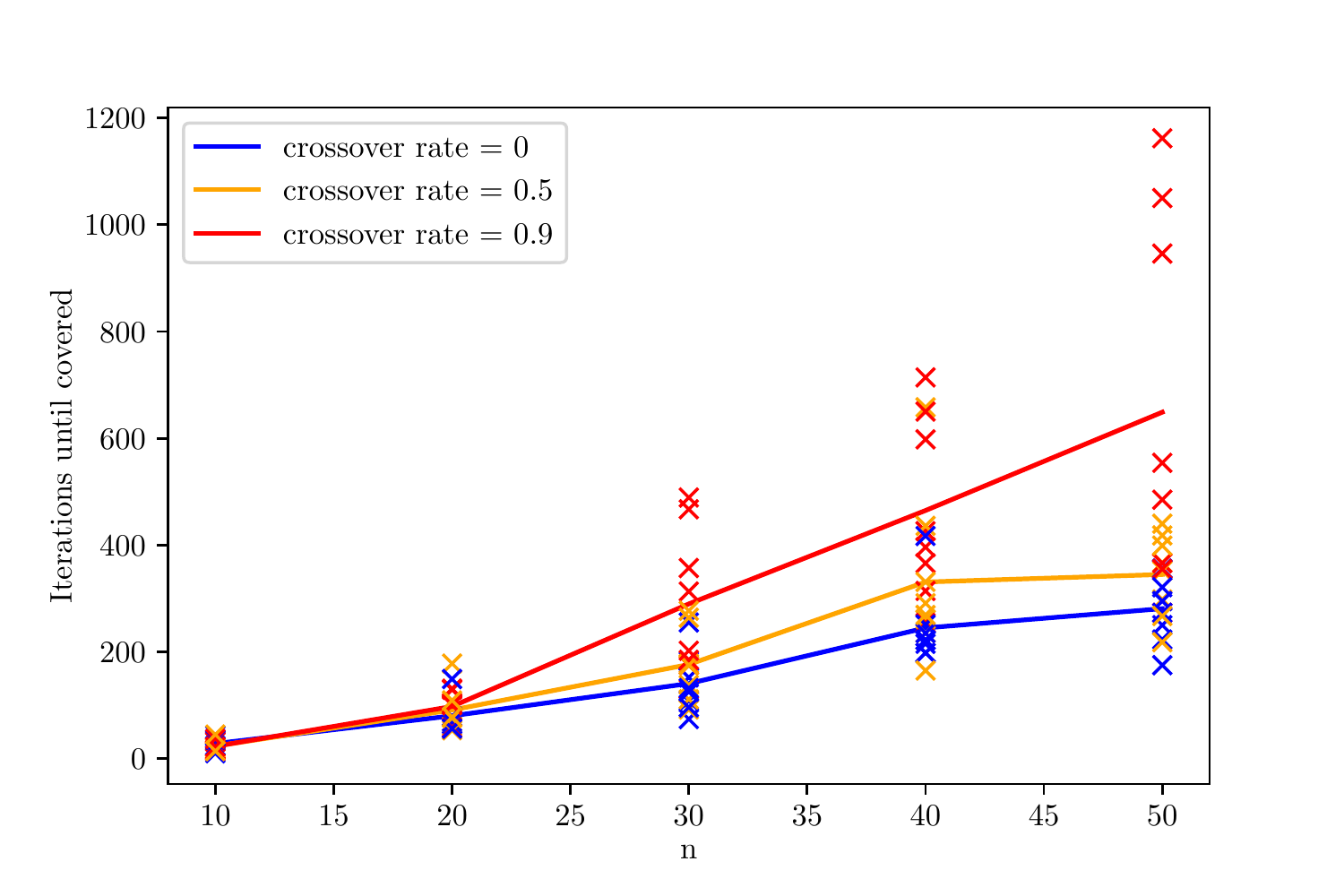}
    \caption{The number of iterations required to cover the complete Pareto front for multiple runs of the NSGA-III (8 runs for each of the different values of $n$ and the different crossover variants (mutation only, and crossover rates of $1/2$ and $9/10$.}
    \label{fig:optimizationTimes}
\end{figure}
\FloatBarrier

\section{Conclusion}

In this first mathematical runtime analysis of the \NSGA, we proved that this algorithm with suitable parameters never loses a Pareto optimal solution value  when optimizing the \threeOMM problem. This is in drastic contrast to the \NSGAtwo, which provably also with large populations regularly suffers from such losses and consequently cannot find (or approximate beyond a certain constant factor) the Pareto front of \threeOMM in sub-exponential time~\cite{ZhengD22arxivmany}. From this positive property of the \NSGA, we derive that it finds the full Pareto front of \threeOMM in an expected number of $O(n \log n)$ iterations. Our experimental results confirm the drastically different behavior of the algorithms.

From the mathematical proofs we are optimistic that the key property of not losing desired solution values extends to broader classes of problems. In fact, our main argument was that the angle (with the origin) between any two different normalized solutions is large enough to prevent that both are associated with the same reference point. We have not used any properties of the optimization process there, but only the structure of the set of objective values of the problem. We are thus optimistic that similar properties hold for other optimization problems, say with integral objectives of a bounded range. 

This one first rigorous result on the \NSGA clearly is not enough to derive reliable advice on which MOEAs to prefer, but we believe that it should motivate more users of the \NSGAtwo to also try the \NSGA. If only part of the drastic differences proven here extend to broader classes of problems, this switch is definitely justified. 

}

\section*{Acknowledgments}
This work was supported by a public grant as part of the Investissements d'avenir project, reference ANR-11-LABX-0056-LMH, LabEx LMH and a fellowship via the International Exchange Program of \'Ecole Polytechnique.

\bibliographystyle{alphaurl}
\bibliography{ich_master,alles_ea_master,rest}

\end{document}